%% file: tacl2021v1-template.tex
\documentclass[11pt,a4paper]{article}
\usepackage{times,latexsym}
\usepackage{url}
\usepackage[T1]{fontenc}

\usepackage[acceptedWithA]{tacl2021v1}

\usepackage{xspace,mfirstuc,tabulary}

\newif\iftaclinstructions
\taclinstructionsfalse 
\iftaclinstructions
\renewcommand{\confidential}{}
\renewcommand{\anonsubtext}{(No author info supplied here, for consistency with
TACL-submission anonymization requirements)}
\newcommand{\instr}
\fi

\iftaclpubformat 

\else

\fi

\usepackage{multirow}
\usepackage{multicol}
\usepackage{amsmath}
\usepackage{amssymb}
\usepackage{amsthm}
\usepackage{dsfont}
\theoremstyle{definition}   
\newtheorem{hypothesis}{Hypothesis}
\theoremstyle{plain}    
\newtheorem{proposition}{Proposition}
\usepackage{bm}
\usepackage{booktabs}
\usepackage{xcolor}
\usepackage{colortbl}
\usepackage{graphicx}
\usepackage{subcaption}

\usepackage{tikz}
\usetikzlibrary{shapes.geometric}
\definecolor{rqLineGray}{HTML}{555555}
\newcommand{\rqlinemarker}[2]{%
  \tikz[baseline=-0.4ex]{%
    \draw[#1, rqLineGray, line width=0.9pt] (0,0) -- (1.1em,0);
    \node[#2, fill=white, draw=rqLineGray, line width=0.4pt,
          inner sep=0pt, minimum width=0.55em, minimum height=0.55em] at (0.55em,0) {};
  }}
\newcommand{\llamamarker}{\rqlinemarker{solid}{circle}}
\newcommand{\qwenthreemarker}{\rqlinemarker{densely dotted}{regular polygon, regular polygon sides=3}}
\newcommand{\qwentwofivemarker}{\rqlinemarker{dash dot}{regular polygon, regular polygon sides=3, shape border rotate=180}}
\newcommand{\gemmamarker}{\rqlinemarker{dashed}{rectangle}}

\definecolor{promptbg}{HTML}{F2F4F4}
\newcommand{\promptbox}[1]{%
  \par\noindent
  \colorbox{promptbg}{\parbox{\dimexpr\linewidth-2\fboxsep\relax}{\strut\small #1\strut}}%
  \par
}

\usepackage{soul}
\definecolor{stanceSupport}{HTML}{21A747}
\definecolor{stanceOppose}{HTML}{C62828}
\definecolor{lreBlue}{HTML}{00A2FF}
\definecolor{stanceCf}{HTML}{7B2CBF}
\colorlet{stanceSupportLight}{stanceSupport!10}
\colorlet{stanceOpposeLight}{stanceOppose!10}
\colorlet{lreBlueLight}{lreBlue!10}
\colorlet{stanceCfLight}{stanceCf!10}

\title{Sentence-Level Contextual Entrainment\\in Large Language Models}

\author{
  Yang Liu \qquad Chenhui Chu \\
  Kyoto University \\
  \texttt{yangliu@nlp.ist.i.kyoto-u.ac.jp, chu@i.kyoto-u.ac.jp}
}

\date{}

\begin{document}
\maketitle
\begin{abstract}
Contextual entrainment, which is a newly discovered phenomenon in large language models (LLMs), refers to the tendency of a model to assign higher probabilities to tokens that appear in its context. 
In this work, we extend this phenomenon from the token level to the sentence level by examining the per-token mean log-probability of a sentence instead of the probabilities of individual tokens.
We investigate sentence-level contextual entrainment across 26 LLMs from seven families and two datasets, which cover both subjective and objective tasks.
We find that sentence-level contextual entrainment exists.
This means that the sentences in the prompt (even if they are counterfactual statements) can significantly increase their probability during model inference time.
As the model size increases, contextual entrainment gradually decreases.
We also find that contextual entrainment is controlled by 2\% to 4\% of the attention heads. 
Turning off these attention heads can effectively mitigate contextual entrainment without hurting the model's performance.\footnote{Our code is available at \url{https://github.com/ku-nlp/Sentence-Level_Contextual_Entrainment_in_LLMs}.}
\end{abstract}

\section{Introduction}
\label{sec:intro}
Large language models (LLMs) have demonstrated remarkable in-context learning (ICL) capabilities, enabling them to effectively leverage contextual information provided in the prompt without any parameter updates~\cite{brown2020language}.
Due to its simplicity, flexibility, and outstanding empirical performance, ICL has become an important approach for numerous natural language processing tasks, spanning a wide range of domains from classification~\cite{zhao2021calibrate}, question-answering~\cite{li-etal-2023-shot}, reasoning~\cite{wei2022chain}, and code generation~\cite{chen2021evaluating}.

To understand how ICL works, \citet{dai2023can} explain it as implicitly performing gradient descent, with the model acting as a meta-optimizer that produces meta-gradients from the demonstration examples. 
From a circuit perspective, this ability has been traced to induction heads that complete patterns by copying relevant tokens from the context as the model's response~\cite{olsson2022incontextlearninginductionheads,crosbie2025induction}.
These studies mostly explain how models benefit from contextual information in prompts; however, how models misuse contextual information in prompts is comparatively less understood.

Recently, \citet{niu-etal-2025-llama} uncovered a striking phenomenon they term \emph{contextual entrainment}: an LLM systematically increases the probabilities of any token that has appeared in the context, including tokens that have no semantic connection to the subsequent query.
As shown in Figure~\ref{fig:teaser-token}, given the context ``Paris is part of France.'' followed by the query ``Which country is Tokyo in?'' the next token probability of ``France'' (a token from the context) rises sharply above its no-context baseline, even though the correct next token is ``Japan.''
Through a differentiable masking analysis, \citet{niu-etal-2025-llama} traced this phenomenon to a small set of attention heads and demonstrated that setting their outputs to zero can reduce entrainment.

\input{figures/fig_example.tex}

\citet{niu-etal-2025-llama}'s analysis, however, is restricted to the token level: it quantifies the increase in the probabilities of a single token at the position of the next token predicted by the model.
Sentences are the more common unit of input and output in LLM use; 
we therefore extend token-level contextual entrainment to the sentence level. 
As shown in Figure~\ref{fig:teaser-sent}, under the same prompt above, the entire sentence ``Paris is part of France.'' receives a sharp probability increase as a candidate continuation—even though it is factually incorrect about Tokyo—while the probability of the correct answer ``Tokyo is part of Japan.'' is decreased.

Our work differs from prior studies~\cite{niu-etal-2025-llama,kukreja2026betterworsescalecontextual} in the following main points. 
First, we study contextual entrainment at the sentence-level rather than the token-level, an extension that more closely matches how information accumulates during realistic generation. 
Second, whereas \citet{niu-etal-2025-llama} reports results for a single model~\cite{grattafiori2024llama3herdmodels} and the panel of \citet{kukreja2026betterworsescalecontextual} is limited to the older Cerebras-GPT~\cite{dey2023cerebras} and Pythia~\cite{biderman2023pythia} families, we run experiments on 26 models spanning seven families. 
Third, existing work measures entrainment only on factual recall tasks such as LRE~\cite{hernandez2024linearity}, while we introduce the subjective task WVS~\cite{wvs_round7_v6} as a second probe.
Fourth, beyond the entrainment heads tied to individual relations, we identify a set of shared heads common to all relations.
We are the first to identify heads that generalize across relations.

We focus on the following three research questions.
\textbf{RQ1:} Does the contextual entrainment phenomenon also exist at the sentence level?
We investigate this in both subjective and objective tasks.
\textbf{RQ2:} How does contextual entrainment scale with model size? 
As LLMs encode sentences through contextual representations whose richness scales with model capacity, sentence-level entrainment may vary with model size.
\textbf{RQ3:} Do LLMs have a small set of entrainment heads that can be turned off to mitigate contextual entrainment without hurting task performance?

Our main contributions are as follows:
\textit{1)} We extend token-level contextual entrainment to the sentence level by representing the probability of the model's response with its per-token mean log-probability (\S\ref{sec:sentence_level_ce}).
\textit{2)} We extend differentiable attention head masking to the sentence level (\S\ref{sec:loss}).
\textit{3)} Our experiments demonstrate that sentence-level contextual entrainment exists; 
this entrainment persists even when the sentence in the prompt is a counterfactual statement associated with the query (\S\ref{sec:rq1}). 
\textit{4)} By analyzing different sizes of four model families, we find that the contextual entrainment phenomenon is related to model size: as model size increases, contextual entrainment gradually decreases. 
In contrast, the distraction on the response that does not appear in the context increases with model size (\S\ref{sec:rq2}).
\textit{5)} We identify a sparse set of shared heads: turning off only 2\% to 4\% of the attention heads can effectively mitigate contextual entrainment without hurting model performance (\S\ref{sec:rq3}).

\section{Background and Formalisation}
\label{sec:sentence_level_ce}
\subsection{Notations}
Let $\mathcal{M}$ be an LLM parameterized by $\theta$, with vocabulary $\mathcal{V}$. 
For any token sequence $x$, $\mathcal{M}$ produces logits $z=h_\theta(x)$, and we write its predictive distribution as $\pi=\texttt{Softmax}(z)$, with $\log \pi(w \mid x)$ denoting the log-probability assigned to a candidate next token $w \in \mathcal{V}$.

For a token sequence $y = (y_1, \dots, y_L)$, the model's log-probability in generating $y$ after the token sequence $x$ decomposes by the chain rule:
\begin{equation}
    \log \pi (y \mid x) = \sum_{i=1}^L \log \pi (y_i \mid x, y_{<i}),
    \label{eq:log_p_sum}
\end{equation}
where $y_{<i} = (y_1, \dots, y_{i-1})$ and $y_{<1}$ are empty. 

\paragraph{Inputs} The input can be split into two parts:
\begin{itemize}
    \item A query $q$, such as ``Tokyo is part of'' (the next-token completion style introduced by \citet{niu-etal-2025-llama}) or ``Which country is Tokyo in?'' (for our sentence-level queries);
    \item A context $c$, an additional segment prepended to $q$, such as ``Paris is part of France.''
\end{itemize}
We denote the prompt by $p$, formed by concatenating $c$ and $q$ in that order, and use the comma notation $c, q$ inside conditioning expressions to denote this concatenation.\footnote{Several notations for sequence concatenation appear in the existing work (e.g., \citet{liu-chu-2025-llms} uses $\oplus$), but we follow the comma convention adopted in \citet{niu-etal-2025-llama}.} The contrast between $\log \pi(\cdot \mid q)$ and $\log \pi(\cdot \mid c, q)$ is the central object of study throughout this work.

\subsection{Token-Level Contextual Entrainment}

Let $\mathcal{T}(c)$ be the set of tokens that appear in the context $c$:
\begin{equation}
    \mathcal{T}(c) = \{w \in \mathcal{V} : w \text{ appears at least once in } c\}.
\end{equation}
The contextual entrainment phenomenon concerns how token $w$ in $\mathcal{T}(c)$ affects the log-probability of the model's next token.
The log-probability increase at the token level can be expressed as:
\begin{equation}
    \Delta \log \pi(w \mid c, q) := \log \pi(w \mid c, q) - \log \pi(w \mid q).
    \label{eq:delta_logp_token}
\end{equation}
Eq.~(\ref{eq:delta_logp_token}) measures the change in the log-probability of generating $w \in c$ when context $c$ is concatenated before query $q$.
Positive values indicate that concatenating the context $c$ increases the probability of the next token $w$.

Next, we formalize the token-level contextual entrainment phenomenon:
\begin{hypothesis}\label{hy:h1}
For every token $w \in \mathcal{T}(c)$, the log-probability of the model's next token $w$ is systematically increased by concatenating the context $c$ to the query $q$:
\begin{equation}
    \forall\, w \in \mathcal{T}(c):\quad \mathbb{E}[\Delta \log \pi(w \mid c, q)] > 0,
\end{equation}
where the expectation is taken over the data distribution of $(c, q)$ pairs.
\end{hypothesis}
This restates \citet{niu-etal-2025-llama}'s finding in log-probability form.
The original definition is expressed in terms of probabilities; for comparisons at the token level, the two are equivalent.

\subsection{Sentence-Level Contextual Entrainment}
We now extend Hypothesis~\ref{hy:h1} from a single token to a sentence.
Let $y = (y_1, \dots, y_L) \in \mathcal{V}$ be a sentence of tokens that appears as a contiguous substring of the context $c$ (so $y_i \in \mathcal{T}(c)$ for every $i$). 
The quantity of interest is how the model's log-probability in generating $y$ as a continuation changes when the prompt is extended from $q$ to $c, q$.
The log-probability increase at the sentence level can be defined as:
\begin{equation}
    \Delta \log \pi(y \mid c, q) := \log \pi(y \mid c, q) - \log \pi(y \mid q).
    \label{eq:delta_logp_sentence}
\end{equation}
We focus on cases where the response $y$ is exactly the context $c$ and investigate whether the model's log-probability in generating $y$ also increases in the expectation when the context $c$ is concatenated before the query $q$.

Applying Eq.~(\ref{eq:log_p_sum}) to each term in Eq.~(\ref{eq:delta_logp_sentence}), and then subtracting them:
\begin{equation}
\begin{aligned}
    \Delta \log \pi(y \mid c, q)
    &= \sum_{i=1}^L \Big[ \log \pi(y_i \mid c, q, y_{<i}) \\ 
    & - \log \pi(y_i \mid q, y_{<i}) \Big].
    \label{eq:decomposition1}
\end{aligned}
\end{equation}
This means that the increase in log-probability at the sentence level is exactly equal to the sum of the increases in log-probability for each token in the sentence.
Each term in Eq.~(\ref{eq:decomposition1}) takes the form of a token-level increase, where the prompt is $c, q, y_{<i}$:
\begin{equation}
\begin{aligned}
    \Delta \log \pi(y_i \mid c, q, y_{<i})
    &= \log \pi(y_i \mid c, q, y_{<i}) \\
    & - \log \pi(y_i \mid q, y_{<i}).
    \label{eq:decomposition2}
\end{aligned}
\end{equation}

\begin{proposition}\label{prop:sentence_ce}
If the model's response $y$ equals (or is a subsequence of) the context $c$ in the prompt $p = (c, q)$, then under Hypothesis~\ref{hy:h1},
\begin{equation}
    \mathbb{E}[\Delta \log \pi(y \mid c, q)] > 0.
\end{equation}
\end{proposition}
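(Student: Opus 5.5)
The argument combines the chain-rule decomposition in Eq.~(\ref{eq:decomposition1}) with linearity of expectation, and then applies Hypothesis~\ref{hy:h1} term by term. Concretely, I would take expectations of both sides of Eq.~(\ref{eq:decomposition1}) over the data distribution of $(c,q)$ pairs, together with the induced response $y$, which is a function of $c$. This gives
\begin{equation*}
\mathbb{E}[\Delta \log \pi(y \mid c, q)] = \sum_{i=1}^{L} \mathbb{E}\big[\Delta \log \pi(y_i \mid c, q, y_{<i})\big],
\end{equation*}
where each summand is the token-level increase of Eq.~(\ref{eq:decomposition2}). If $L$ varies across samples, I would first condition on $L$, or equivalently write the sum as $\sum_{i\ge 1}\mathbb{E}[\mathds{1}\{i\le L\}\,\Delta\log\pi(y_i\mid c,q,y_{<i})]$. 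The same step works for the per-token mean log-probability, since dividing by $L>0$ preserves sign.

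Next I would verify that each summand meets the conditions of Hypothesis~\ref{hy:h1}. The hypothesis concerns a token $w\in\mathcal{T}(c)$, the context $c$, and a query prepended by $c$. Here the effective query is $q' = (q, y_{<i})$ and the conditioning is $(c, q')$. The baseline in Eq.~(\ref{eq:decomposition2}) is $\log\pi(y_i\mid q, y_{<i}) = \log\pi(y_i\mid q')$, so the difference has exactly the form of Eq.~(\ref{eq:delta_logp_token}) with $q$ replaced by $q'$. Because $y$ is $c$ or a contiguous piece of it, $y_i\in\mathcal{T}(c)$. Hypothesis~\ref{hy:h1} therefore gives $\mathbb{E}[\Delta\log\pi(y_i\mid c,q')]>0$ for every $i$. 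A finite sum of strictly positive terms is strictly positive, which yields the claim.

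The main obstacle is the middle step. Hypothesis~\ref{hy:h1} is stated for the data distribution of $(c,q)$ pairs. The extended queries $q'=(q,y_{<i})$, however, are not drawn from that distribution, and they are correlated with $c$ because $y_{<i}$ is a prefix of $c$. My first attempt would be to read the hypothesis as quantifying over any context/query pair distribution. Under that reading it applies directly to the induced distribution of $(c, q')$ for each fixed $i$. I would state explicitly that the proposition relies on this reading, namely that entrainment holds uniformly over query prefixes.

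A second subtlety is that $w=y_i$ is itself random and tied to $c$. This is harmless, because the hypothesis holds for every $w\in\mathcal{T}(c)$. Conditioning on $(c,q)$ and then applying it pointwise handles it.
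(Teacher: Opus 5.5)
Your proposal is correct and follows essentially the same route as the paper's proof sketch: expand $\Delta\log\pi(y\mid c,q)$ via Eq.~(\ref{eq:decomposition1}), apply Hypothesis~\ref{hy:h1} to each token $y_i\in\mathcal{T}(c)$ with context $c$ and extended query $q, y_{<i}$, and sum the resulting positive terms by linearity of expectation. You state explicitly that this requires Hypothesis~\ref{hy:h1} to hold for the induced distribution over extended queries $q'=(q,y_{<i})$, not only for the original $(c,q)$ data distribution; the paper uses this assumption without saying so, so making it explicit strengthens your argument rather than departing from the paper's.
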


\begin{proof}[Proof sketch]
Given that $y$ equals (or is a subsequence of) $c$, every token $y_i$ belongs to $\mathcal{T}(c)$.
For each position $i$, Hypothesis~\ref{hy:h1} applied with token $w = y_i$ and prompt $p = c, q, y_{<i}$ (noting $y_i \in \mathcal{T}(c) \subseteq \mathcal{T}(p)$) gives $\mathbb{E}[\Delta \log \pi(y_i \mid c, q, y_{<i})] > 0$, i.e., each summand in Eq.~(\ref{eq:decomposition1}) is positive in expectation.
Applying the expectation linearly to Eq.~(\ref{eq:decomposition1}) yields:
\begin{equation}
\begin{aligned}
    &\mathbb{E}[\Delta \log \pi(y \mid c, q)] \\
    &\quad = \sum_{i=1}^L \mathbb{E}[\Delta \log \pi(y_i \mid c, q, y_{<i})] > 0.
\end{aligned}
\end{equation}
\end{proof}

\section{Sentence-Level Entrainment Head Discovery}
\label{sec:loss}
Following~\citet{niu-etal-2025-llama} we identify the attention heads responsible for contextual entrainment via differentiable masking~\cite{de2020decisions}.
A learnable Gumbel-sigmoid gate $m_{l,h} \in \{0, 1\}$~\citep{jang2017categorical} is attached to every attention head $h$ at every layer $l$, scaling the head's output;
setting $m_{l,h} = 0$ multiplies that head's contribution to the residual stream by zero.
During training, the gate is computed as
\begin{equation}
    m_{l,h} = \mathds{1}\!\left[\sigma\!\left(\frac{\ell_{l,h} + g}{\tau}\right) > \frac{1}{2}\right],
    \label{eq:rq3-gate}
\end{equation}
where $\sigma(x) = \frac{1}{1 + e^{-x}}$ is the sigmoid function, $g$ is Logistic noise, $\tau \in (0, \infty)$ is a temperature hyperparameter, and $\mathds{1}[\cdot]$ is the indicator function;
the gradient bypasses the discretization through the straight-through estimator~\citep{bengio2013estimating}. 
At inference time, we deterministically set $m_{l,h} = \mathds{1}[\ell_{l,h} > 0]$.
In this section, we use the factual statements as the context, denoted as $c$.
The model's no-context natural response is $r$, evaluating $c$ and $r$ under the same with-context prompt:
$\overline{L_{c}} = \frac{1}{|c|}\sum_{i=1}^{|c|} \log \pi(c_i \mid c, q, c_{<i})$ and
$\overline{L_{r}} = \frac{1}{|r|}\sum_{i=1}^{|r|} \log \pi(r_i \mid c, q, r_{<i})$.
The mask logits $\ell_{l,h}$ are trained to minimize:
\begin{equation}
    \begin{aligned}
        \mathcal{L} &= \mathrm{softplus}(\overline{L_c} - \overline{L_{r}})
  \\& + \alpha \cdot \mathrm{KL}\!\bigl(P_{\text{nm}} \,\|\, P_{\text{m}}\bigr)
  + \lambda \cdot \tfrac{1}{|H|}\sum_{l,h}\!\bigl(1 - \sigma(\ell_{l,h})\bigr).
    \end{aligned}
\label{eq:rq3-loss}
\end{equation}
We wrap $\overline{L_c} - \overline{L_{r}}$ with a softplus function~\cite{dugas2000incorporating} so that the mask is no longer modified once $\overline{L_c} < \overline{L_{r}}$, as continued modification in this regime harms the decoder~\cite{gao2023scaling}. 
Put simply, the optimization halts as soon as the model prefers $r$ over $c$.
$P_{\text{nm}}$ is the next-token distribution of the model without context, and $P_{\text{m}}$ is the distribution obtained under the same condition with the current mask applied. 
The KL divergence~\cite{kullback1951information} pulls $P_{\text{nm}}$ and $P_{\text{m}}$ closer to each other, a smaller value indicates a smaller discrepancy between the two distributions~\cite{liu-hou-2023-mining,liu2024robust}.
$\ell_{l,h}$ is the learnable mask logit of the $h$-th attention head in the $l$-th layer.
Here $\sigma(\ell_{l,h})$ denotes the probability that the head is retained, while $1 - \sigma(\ell_{l,h})$ denotes the probability that it is turned off. 
The third term encourages the model to turn off fewer heads.
$\alpha$ and $\lambda$ are hyperparameters. 

\section{Experimental Settings}

\subsection{Datasets}
\label{sec:datasets}

We evaluate the contextual entrainment phenomenon on two datasets, covering both objective and subjective tasks: 
\paragraph{Linearity of Relation Decoding~\citep[LRE;][]{hernandez2024linearity}}
LRE contains 47 relations across four categories (\textit{factual associations}, \textit{commonsense knowledge}, \textit{implicit biases}, \textit{linguistic knowledge}), each contains facts in the triplet format: $\langle source, target, relation \rangle$. 
For example, $\langle Tokyo, Japan, city\ in\ country \rangle$ corresponds to the fact that \textit{Tokyo is part of Japan}.
We apply the filtering rules from~\citet{niu-etal-2025-llama} to keep only 16 relations covering 3,688 samples.
For each retained relation, we keep at most 50 samples: relations with more than 50 instances are downsampled to 50, and the others are kept in full.
If the resulting count is odd, one sample is discarded.
The remaining samples of each relation are then split in a 1:1 ratio into a context pool (used as the context) and a query pool (used as the query).
This results in 594 triples in total: 297 in the context pool and 297 in the query pool. 
Each triple can be assembled into a context and a query for the contextual entrainment experiment.

\paragraph{World Values Survey~\citep[WVS;][]{wvs_round7_v6}}
The WVS is a global survey of human values; 
we use the publicly released Wave~7 (2017--2022), which contains opinion items on social, political, and ethical topics.
We utilize the dataset filtered from~\citet{liu2026alignment}'s work.
After removing items whose answer scales were not naturally binary, we retained 135 items.
Each item is paired with two opposing opinion statements generated by GPT-5.4:\footnote{\url{https://openai.com/}} 
a support statement to express the positive opinion, 
and an oppose statement to express the negative opinion.
We split these 135 items into a context pool containing 5 items and a test set containing 130 items. 

\subsection{Prompts}
\label{sec:prompts}
In this section, we introduce the exact prompt formats used in our experiments.
We use a consistent prompt format in LRE and WVS:
A prompt consists of two components: (i) \textit{Context} and (ii) \textit{Query}.
Then, the model is required to generate a \textit{Response} in the form of a sequence of tokens.

For the LRE dataset, the prompt is as follows:
\promptbox{
\textit{\# Context}\\
Paris is part of France.\\
\textit{\# Query} \\
Which country is Tokyo in?\\
\textit{\# Response}\\
\textless Response\textgreater
}
The role markers (\textit{\# Context}, \textit{\# Query}, \textit{\# Response}) are illustrative only and are not part of the actual prompt.
The query is a question that asks for the target of a relation instance, and the model is required to generate the answer as the response.
We use two types of context. 
The context is a relation statement expressed with a fixed sentence template.
We construct two types of relation statements for each query: a factual statement and a counterfactual statement.
A factual statement comes from the same relation as the query, but its $\textit{source}$ and $\textit{target}$ are different from those in the query; 
for example, when the query is ``Which country is Tokyo in?'', the context could be ``Paris is part of France.'' 
A counterfactual statement keeps the $\textit{source}$ of the query but replaces the $\textit{target}$ with an incorrect one, such as ``Tokyo is part of France.'' 

For the WVS dataset, the prompt is as follows:
\promptbox{
\textit{\# Context}\\
I would mention immigrants or foreign workers because I want neighbors who share my language, customs, and way of life, so daily contact feels easier and less stressful.\\
\textit{\# Query} \\
How important is family in your life?\\
\textit{\# Response}\\
\textless Response\textgreater
}
The query $q$ is to ask about the respondent's opinion, e.g., ``How important is family in your life?'' 
The context is an opinion statement used to respond to the query $q'$; the opinion statement is either a support statement or an oppose statement for the query $q'$. 
The context can also be removed to obtain a no-context baseline.

\paragraph{Response Categories}
To systematically characterize the effect of the context $c$ on model generation, we partition the evaluated responses $r$ for each query $q$ into three categories:

\begin{itemize}
\item Context response: the response is identical to the context ($r = c$);
that is, the model is required to reproduce a sentence that appeared in the prompt;
\item Correct response: the response is the correct answer to the query $q$ and is not equal to the context ($r \ne c$ and $r$ = correct);\footnote{In our work, the context is never the correct answer.}
\item Incorrect response: the response neither equals the context nor is the correct answer ($r \ne c$ and $r \ne$ correct).
\end{itemize}

The exact form of the correct response depends on the dataset.
For LRE, it comprises both the gold response to the query and the no-context natural response produced by the model itself.
For WVS, it comprises the matching support or oppose statement associated with the query and the no-context natural response produced by the model.

The exact form of the incorrect response also depends on the dataset.
For LRE, it is the alternative of the pair \{factual statement, counterfactual statement\} not used as the context.
For WVS, it is the alternative of the pair \{support statement, oppose statement\} that is not used as the context.

\subsection{Metrics}
\label{sec:metrics}
\paragraph{Contextual Entrainment}


We measure this effect as the difference in the log-probability that the model's response is sentence $s$, under two prompting conditions:
when the prompt is the concatenation of context $c$ (the same as the sentence $s$) and query $q$, versus when the prompt consists of the query $q$ alone. 
Formally,
\begin{equation}
\begin{aligned}
    \mathcal{E}(c \mid c, q)
    &= \frac{1}{|c|}\sum_{i=1}^{|c|} \log \pi(c_i \mid c, q, c_{<i}) \\
    & - \frac{1}{|c|}\sum_{i=1}^{|c|} \log \pi(c_i \mid q, c_{<i}),
    \label{eq:entrainment}
\end{aligned}
\end{equation}
where $|c|$ denotes the number of tokens in the context $c$, and $c_i$ is its $i$-th token, so the average is taken over the $|c|$ teacher-forced per-token log-probabilities of $c$. 

\paragraph{Distraction}
\label{sec:distraction}
Whereas Eq.~(\ref{eq:entrainment}) measures the tendency of the model to reproduce the context sentence itself, distraction measures the effect of the context $c$ on the log-probability of a response $r$ that is not equal to the context ($r \ne c$).
Formally,
\begin{equation}
\begin{aligned}
    \mathcal{D}(r \mid c, q) 
    &= \frac{1}{|r|}\sum_{i=1}^{|r|} \log \pi(r_i \mid c, q, r_{<i}) \\
    & - \frac{1}{|r|}\sum_{i=1}^{|r|} \log \pi(r_i \mid q, r_{<i}),
    \label{eq:distraction}
\end{aligned}
\end{equation}
where the response $r$ may correspond to the correct or reasonable answer to the query or any other candidate sentence distinct from the context.
Note that our usage is broader than the behavioral notion of distraction in prior work~\citep{shi2023large}: Our distraction metric is a neutral measure of how the context shifts the probability of a response not equal to the context, and distraction in the conventional sense corresponds to the special case where distraction is negative for a correct response.

\subsection{Models}
\label{sec:models}
All experiments use open-weight decoder-only LLMs from Hugging Face. 
For RQ1 and RQ3, we use Gemma-2-9B~\cite{team2024gemma}, Llama-3.1-8B~\cite{grattafiori2024llama3herdmodels}, and Mistral-7B~\cite{jiang2023mistral7b}. Because the sentences we study are produced by models and may therefore be affected by instruction tuning~\cite{kuribayashi2024psychometric}, we compare the base and instruction-tuned variants of three model families.
For RQ2, we use multiple sizes of four model families: 
Llama-2-7B/13B/70B~\cite{touvron2023llama2openfoundation}, Qwen2.5-0.5B/1.5B/3B/7B/14B/32B/72B~\cite{qwen2025qwen25technicalreport}, Qwen3-0.6B/1.7B/4B/8B/14B/32B~\cite{yang2025qwen3technicalreport}, and Gemma-3-1B/4B/12B/27B~\cite{gemmateam2025gemma3technicalreport}. 
We disable Qwen3's thinking mode at inference time so that the scored continuation does not include the reasoning trace.

\subsection{Mask Training}
We optimize the mask logits with AdamW~\cite{loshchilov2017decoupled} at a learning rate of 0.1, with $\alpha = 1.0$ and $\lambda = 2.0$ for at most 500 epochs, with early stopping once the development set shows no improvement over 20 consecutive epochs.
For each relation, we split the samples into training, development, and test sets at an 80/10/10 ratio, and within every set, we form $(c, q)$ combinations by pairing each sample's statement as the context $c$ with another sample's question as the query $q$.
We train a separate mask for each LRE relation, so each model produces 16 relation-specific masks (we call these masked heads \textit{per-relation heads}).
From these 16 per-relation masks, we derive a single set of \textit{shared heads} for each model: the heads turned off by at least 8 of the 16 per-relation masks, i.e., the heads consistently identified as entrainment-relevant across relations.

\section{Experiments}

\subsection{Sentence-Level Contextual Entrainment (RQ1)}
\label{sec:rq1}
\input{tables/table_rq1_lre.tex}
\input{tables/table_rq1_wvs.tex}
\input{figures/fig_rq1_lre_bars.tex}

\paragraph{Method.}
We examine how the mean log-probability that the model assigns to the response tokens changes when the context $c$ is concatenated to the query $q$.
For the LRE dataset, we compute accuracy through string matching between the gold response and the model's natural response.
To evaluate the model's natural response on the WVS dataset, we classify its polarity through the OpenAI embedding API (text-embedding-3-small),\footnote{\url{https://platform.openai.com/docs/guides/embeddings}} computing the cosine similarity between the natural response and the support and oppose statements associated with the query.
Then, we assign the polarity of the statement with the higher similarity as the polarity of the natural response.

\paragraph{Results.}
Figure~\ref{fig:rq1-lre-bars} reports the mean log-probability shift of Llama-3.1-8B on the LRE dataset.
Results on other models and the WVS dataset also show the same pattern.
Table~\ref{tab:rq1-lre} shows the effect of context on the accuracy of the model's natural response.

\textbf{(i) Sentence-level contextual entrainment exists, including under counterfactual statements.} 
In Figure~\ref{fig:rq1-lre-bars}, the cases where the response equals the context are instances of contextual entrainment.
We find that sentence-level contextual entrainment exists, meaning that a sentence appearing in the prompt receives a higher probability in the model response.
Factual and counterfactual statements also raise each other's probability, and this effect can even exceed the probability gain on the gold answer. 
This is counterintuitive, as it indicates that counterfactual statements (incorrect examples) can also increase the probability that the model continues to generate the correct responses.
The observation is consistent with prior work reporting that few-shot demonstrations with incorrect labels still improve model performance~\cite{min-etal-2022-rethinking, liu-chu-2025-llms}.
The probability of the model's natural response is distracted, regardless of whether the context is factual or counterfactual, which confirms that the influence of the context on the model output genuinely exists.

\input{figures/fig_rq2_wvs_panel.tex}
\textbf{(ii) The shifts in log-probability translate into behavior-level changes in correctness under free generation.}
Tables~\ref{tab:rq1-lre} and~\ref{tab:rq1-wvs} report how different context types affect the model's natural response: accuracy on the LRE dataset and polarity consistency on the WVS dataset.
On the LRE dataset, every model decreases a substantial portion of its accuracy under factual context; the decrease is larger under counterfactual context, where Llama-3.1-8B drops from 79.4\% to 22.4\%; accuracy falls even when the context is the model's own natural response.
On the WVS dataset, the consistency of the opinion's polarity decreases most when the polarity of the opinion statement in the context conflicts with that of the model's natural response. 
However, even a same-polarity context, or one that reuses the model's natural response itself, still affects the model's opinion consistency, and natural responses that take an opposing polarity are more susceptible.
Taken together, the context is not a spurious effect at the log-probability level but instead affects the opinion polarity of the model's generation.

\textbf{(iii) Instruction-tuning mitigates the effects of context.}
As shown in Table~\ref{tab:rq1-lre}, instruction tuning substantially mitigates the influence of context on model performance on the LRE dataset: 
the base-model drops of -25.5\%, -31.9\%, and -13.5\% shrink to -15.3\%, -9.5\%, and -4.5\% after tuning. 
Yet, although the instruction-tuned models cut the average degradation by 40\% to 70\% within each family, they never eliminate it. 
On WVS (Table~\ref{tab:rq1-wvs}), instruction tuning yields inconsistent effects across families relative to the base models, lowering consistency on Gemma-2-9B while raising it on Llama-3.1-8B and Mistral-7B; 
the highest average consistency it attains is only 80.3\%. 
Contextual entrainment, therefore, persists after instruction tuning, which shows that it is not an artifact of an insufficiently aligned base model.

\subsection{Model-size Effects on Contextual Entrainment (RQ2)}
\label{sec:rq2}

\paragraph{Motivation.}
Because LLMs are contextual representation models~\cite{vaswani2017attention,devlin2019bert,radford2018improving}, sentence-level contextual entrainment, 
unlike token-level contextual entrainment, depends on the representational capacity of the model. 
In this section, we therefore report the contextual entrainment and distraction for 20 models of varying sizes drawn from four model families (Llama-2, Qwen2.5, Qwen3, and Gemma-3), evaluated across a range of context and response settings.

\paragraph{Results.}
Figure~\ref{fig:rq2-wvs} reports how the log-probability change of the response, induced by the appearance of the context in the prompt, varies across three context and response settings on the LRE and WVS dataset. 
A larger value indicates that the context contributes more to raising the probability of the response. 
 
\textbf{(i) Contextual entrainment decreases as model size increases.}
Figure~\ref{fig:rq2-wvs-E} shows that contextual entrainment decreases as the model size increases:
on Qwen3, it falls from 3.91 at 0.6B to 3.35 at 32B.
On LRE (Figure~\ref{fig:rq2-lre-E}), the factual context follows the same decreasing trend, whereas the counterfactual context deviates from this pattern, with the value instead increasing slightly with model size.
This indicates that the model resists repeating counterfactual context, and this resistance is more pronounced in smaller models.
\input{tables/table_rq3_entrainment.tex}
\input{tables/table_rq3_accuracy.tex}

\textbf{(ii) Distraction increases as model size increases.}
Figures~\ref{fig:rq2-wvs-I} and~\ref{fig:rq2-wvs-O} illustrate this trend in two types of distracted responses.
For the response matching the query (Figure~\ref{fig:rq2-wvs-I}), the influence of the support statement on the support response increases with model size: on Qwen3, it rises from 0.06 at 0.6B to 0.27 at 32B, and the same trend holds for Qwen2.5, Gemma-3, and Llama-2;
For a response that neither appears in the prompt nor matches the query (Figure~\ref{fig:rq2-wvs-O}), the probability gain likewise increases with model size within every model family, from roughly 0.4 to 0.6 at the smallest size to roughly 0.8 to 1.0 at the largest.
We interpret this as follows: as the context and the response are expressions drawn from a similar topic, a larger model is better able to extract topical or stylistic features from the context and use them to raise the probability of a topically similar response; when the response equals the context, however, token-level entrainment overrides this representational effect.
We can conclude that \textbf{smaller models rely more on token-level contextual entrainment, while larger models rely more on representation-level contextual entrainment.}

\subsection{Entrainment Heads (RQ3)}
\label{sec:rq3}

\paragraph{Results.}
After identifying the entrainment heads through differentiable masking, we validate whether turning them off suppresses contextual entrainment and at what cost to task performance. 

\textbf{(i) Only 2\%-4\% of attention heads suffice to suppress contextual entrainment.}
Table~\ref{tab:rq3-entrainment} shows that a random masking baseline that uses the same number of attention heads barely affects contextual entrainment (2.01 on average, essentially unchanged from the unmasked model).
Per-relation masking, in contrast, almost entirely eliminates it, driving contextual entrainment down to roughly zero (+0.05 on average, slightly negative
on several models). 
Masking the shared heads roughly halves the contextual entrainment of every model: when no heads are masked, the average contextual entrainment is 2.02; however, when only 2\%–4\% of the shared heads are masked, this value drops to 1.11.
The shared entrainment heads are therefore both sparse and general across relations.

\textbf{(ii) Masking the shared heads barely hurts model performance.}
Table~\ref{tab:rq3-accuracy} shows how masking heads affects accuracy in the no-context and with-context settings.
In the no-context setting, masking the shared heads costs only a few percentage points of accuracy on average (-2.3), well below the loss incurred by per-relation masking (-6.8) and by a random mask of the same size (-4.1).
The performance drop primarily occurred in the base models, whereas the instruction-tuned variants across all three families retain essentially all of their no-context accuracy when the shared heads are masked.
Shared heads can therefore be disabled at deployment without retraining or sacrificing the performance of the underlying model.
In the with-context setting, the unmasked model is distracted by the context, and its accuracy falls below the no-context level.
Masking the shared heads improves with-context accuracy on five of the six models (+1.5 to +7.9), and the recovery is most complete on the instruction-tuned variants, whose accuracy returns to within a few percentage points of the no-context ceiling.
A random mask of the same size produces no such recovery, which confirms that the effect comes from these specific heads rather than from the general perturbation of removing an arbitrary 2\% to 4\% of the heads.

\section{Related Work}
\paragraph{Contextual Entrainment and Distraction.}
LLMs are known to be susceptible to distraction: irrelevant or misleading content in the prompt can derail an otherwise correct prediction.
\citet{shi2023large} shows that adding a single irrelevant clause to grade-school math problems substantially degrades the accuracy of models that otherwise solve them reliably.
This vulnerability has drawn particular attention in retrieval-augmented generation, where retrieved passages are inevitably noisy: 
irrelevant or distracting documents in the context degrade answer quality, and models need to be explicitly hardened against them~\citep{yoran2023making, cuconasu2024power}.
The Contextual Entrainment phenomenon describes how an LLM increases the probability of any token that appears in the prompt, 
regardless of its relevance to the query. 
\citet{niu-etal-2025-llama} attributes this entrainment to a small set of entrainment heads. 
While their finding is novel, they experiment on only a single model, Llama-3.1-8B. 
\cite{kukreja2026betterworsescalecontextual} instead studies how contextual entrainment scales across models of different sizes, yet their analysis is confined to small and medium models. 
Both \citet{niu-etal-2025-llama} and \citet{kukreja2026betterworsescalecontextual} measure the logit increases of a single next-token candidate, whereas in actual generation, entrainment accumulates over an entire sentence. 
Prior work also tests entrainment only on the LRE dataset, which leaves open whether the phenomenon shapes subjective behavior as well. 
We add the WVS dataset as a second probe and show that entrainment impacts both factual answers and expressed opinions.

\paragraph{Attention Heads and Mechanistic Interpretability.}
Mechanistic interpretability has identified attention-head circuits that implement specific functions in LLMs. 
\cite{elhage2021mathematical} and \citet{olsson2022incontextlearninginductionheads} identify induction heads, which complete patterns of the form ``$A B \ldots A \to B$'' and drive in-context learning, 
and \cite{crosbie2025induction} extend this analysis to real-world LLMs. Beyond pattern completion, narrower circuits have been localized to specific tasks, as in the indirect object identification circuit of \citet{wang2022interpretability}. 
A common methodology is differentiable masking through a Gumbel-softmax relaxation, which attaches a learnable gate to each component in order to discover the minimal set of components that explains a target behavior. 
Most prior applications aim to explain a correct prediction.
We instead apply attention-head masking as a suppressive intervention, searching for a minimal set of heads whose removal eliminates an unwanted behavior, namely sentence-level contextual entrainment, and we ask whether the heads found in this way are general across relations.

\section{Conclusion}
We extended token-level contextual entrainment to sentence-level and studied it on an objective task (LRE) and a subjective task (WVS). 
Extensive experiments show that the model increases the probability that the context in the prompt, 
including counterfactual context, appears in its continuation. 
When the context is an opinion statement, the model increases the probability of generating opinion statements of both the same and opposite polarity in its continuation.
Contextual entrainment varies predictably with model size: 
it decreases as the model size increases, whereas the probability assigned to query-relevant response candidates increases.
Contextual entrainment is controlled by a small set of shared heads. 
Masking these heads mitigates contextual entrainment by about half while leaving model performance essentially unchanged, which makes the shared heads a concrete and minimal intervention point for mitigating the phenomenon.

\bibliography{tacl2021}
\bibliographystyle{acl_natbib}


\onecolumn

\appendix

\end{document}

%% file: figures/fig_example.tex
\begin{figure*}[t]
  \centering
  \begin{subfigure}[t]{0.45\linewidth}
    \centering
    \includegraphics[width=\linewidth]{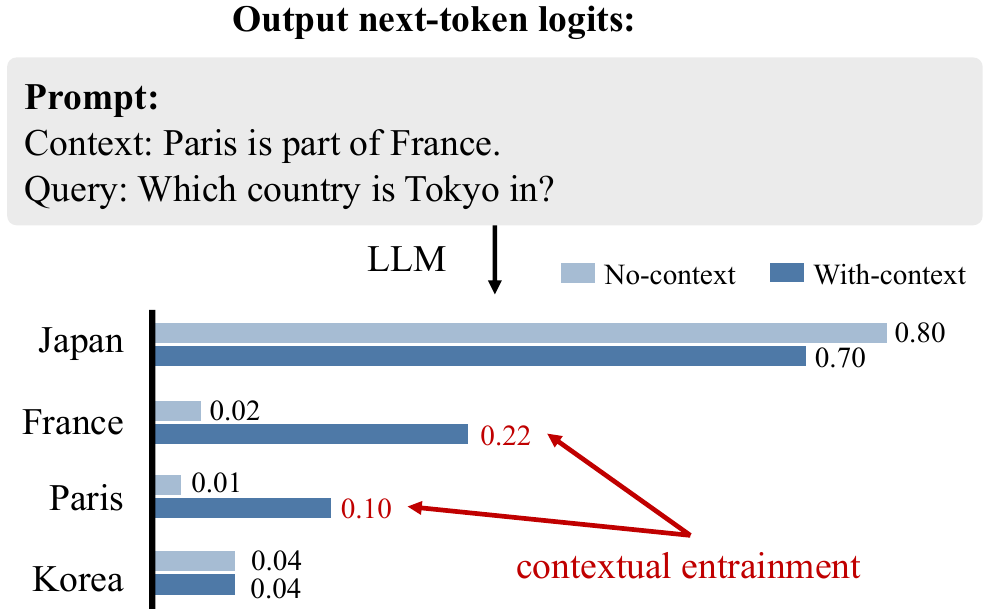}
    \caption{Token-level contextual entrainment \citep{niu-etal-2025-llama}: at the next-token position, in-prompt tokens (e.g., ``France'') receive a large probability increase.}
    \label{fig:teaser-token}
  \end{subfigure}\hfill
  \begin{subfigure}[t]{0.45\linewidth}
    \centering
    \includegraphics[width=\linewidth]{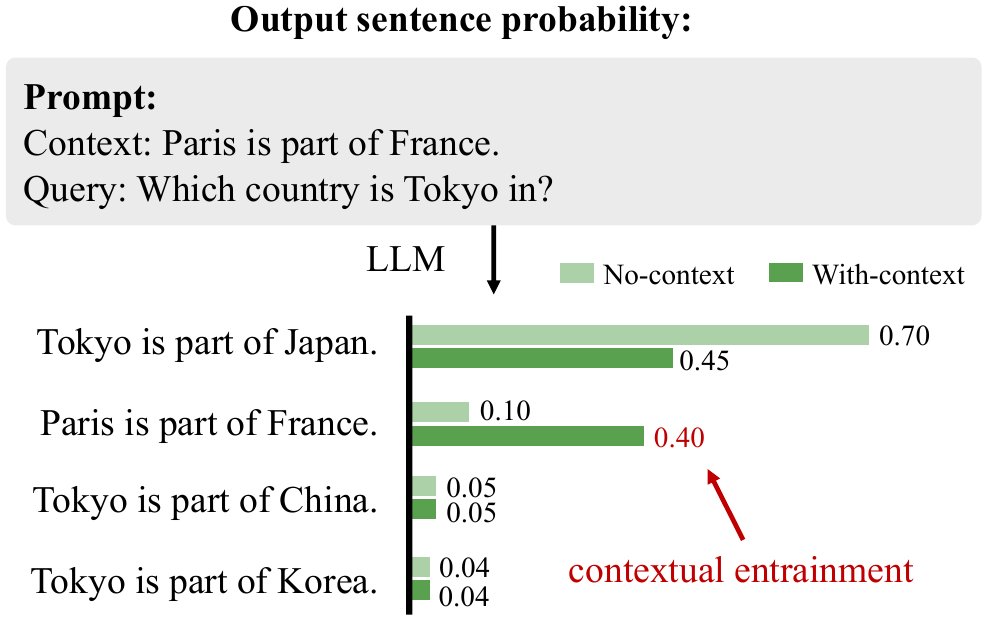}
    \caption{Sentence-level contextual entrainment (Ours): over candidate \emph{full-sentence} continuations, the in-prompt sentence (``Paris is part of France.'') receives a large probability increase.}
    \label{fig:teaser-sent}
  \end{subfigure}
  \caption{Examples of token-level and sentence-level contextual entrainment.}
  \label{fig:teaser}
\end{figure*}

%% file: tables/table_rq1_lre.tex
\begin{table*}[t]
\centering\small
\setlength{\tabcolsep}{4pt}
\begin{tabular}{l|ll|ll|ll|l}
\toprule
      & \multicolumn{2}{c}{\textbf{Gemma-2-9B}} & \multicolumn{2}{c}{\textbf{Llama-3.1-8B}} & \multicolumn{2}{c}{\textbf{Mistral-7B}} & \textbf{Avg.} \\
\cmidrule(lr){2-3} \cmidrule(lr){4-5} \cmidrule(lr){6-7}
      & \textbf{Base} & \textbf{IT} & \textbf{Base} & \textbf{IT} & \textbf{Base} & \textbf{IT} &  \\
\midrule
no-context        & $57.0$           & $80.8$           & $79.4$           & $79.2$           & $67.4$            & $80.8$           & $74.1$ \\
\midrule
factual statement & $7.6_{-49.4}$    & $74.3_{-6.6}$    & $44.7_{-34.7}$   & $73.0_{-6.2}$    & $54.4_{-12.9}$    & $81.4_{+0.6}$    & $55.9_{-18.2}$ \\
\rowcolor{red!8}
counterfactual statement & $29.3_{-27.8}$   & $69.0_{-11.8}$   & $22.4_{-57.0}$   & $55.3_{-23.9}$   & $38.1_{-29.3}$    & $64.6_{-16.2}$   & $46.5_{-27.6}$ \\
\rowcolor{blue!8}
self              & $57.6_{+0.6}$    & $53.2_{-27.6}$   & $75.4_{-4.0}$    & $80.8_{+1.6}$    & $69.0_{+1.6}$     & $82.8_{+2.1}$    & $69.8_{-4.3}$ \\
\midrule
Avg.\             & $31.5_{-25.5}$   & $65.5_{-15.3}$   & $47.5_{-31.9}$   & $69.7_{-9.5}$    & $53.8_{-13.5}$    & $76.3_{-4.5}$    & $57.4_{-16.7}$ \\
\bottomrule
\end{tabular}
\caption{
      Model accuracy on the LRE dataset (\textbf{RQ1}). 
      ``no-context'' denotes the model's natural response without context; 
      ``factual statement'' uses the factual context;
      ``counterfactual statement'' uses the counterfactual context;
      ``self'' uses the model's own ``no-context'' natural response as context. 
      \textbf{Base} denotes the base model and \textbf{IT} the instruction-tuned model.
      {\setlength{\fboxsep}{1pt}\colorbox{blue!8}{Blue shading}} marks the context condition with the smallest effect on accuracy. {\setlength{\fboxsep}{1pt}\colorbox{red!8}{Red shading}} marks the condition that reduces accuracy the most.}
\label{tab:rq1-lre}
\end{table*}

%% file: tables/table_rq1_wvs.tex
\begin{table}[t]
\centering\scriptsize
\setlength{\tabcolsep}{4pt}
\begin{tabular}{l|ll|ll|ll|l}
\toprule
      & \multicolumn{2}{c}{\textbf{Gemma-2-9B}} & \multicolumn{2}{c}{\textbf{Llama-3.1-8B}} & \multicolumn{2}{c}{\textbf{Mistral-7B}} & \textbf{Avg.} \\
\cmidrule(lr){2-3} \cmidrule(lr){4-5} \cmidrule(lr){6-7}
      & \textbf{Base} & \textbf{IT} & \textbf{Base} & \textbf{IT} & \textbf{Base} & \textbf{IT} &  \\
\midrule
$\rho^{n_s}_{c_s}$       & $83.8$ & $84.4$ & $79.8$ & $80.5$ & $80.2$ & $84.1$ & $82.1$ \\
\rowcolor{gray!20}
$\rho^{n_s}_{c_o}$       & $73.5$ & $82.1$ & $72.2$ & $67.8$ & $67.5$ & $77.0$ & $73.4$ \\
\rowcolor{yellow!20}
$\rho^{n_s}_{\text{self}}$ & $99.0$ & $81.8$ & $99.0$ & $93.9$ & $98.9$ & $93.5$ & $94.4$ \\
\midrule
\rowcolor{gray!20}
$\rho^{n_o}_{c_s}$       & $52.9$ & $58.1$ & $50.6$ & $54.2$ & $47.2$ & $66.3$ & $54.9$ \\
$\rho^{n_o}_{c_o}$       & $63.9$ & $66.8$ & $55.6$ & $66.7$ & $59.5$ & $68.9$ & $63.6$ \\
\rowcolor{yellow!20}
$\rho^{n_o}_{\text{self}}$ & $87.1$ & $52.8$ & $81.2$ & $79.2$ & $82.1$ & $92.1$ & $79.1$ \\
\bottomrule
\end{tabular}
\caption{
      Model opinion consistency on the WVS Dataset (\textbf{RQ1}).
      $\rho_{y}^{x}$ denotes the consistency of opinion polarity between the model's natural response (of polarity $x$) given context $y$ and its natural response without context; 
      $c_s$ and $c_o$ denote contexts that support and oppose the statement, 
      while $n_s$ and $n_o$ denote whether the model's natural response is closer to the supporting or the opposing statement relative to query $q$.  
      ``self'' uses the model's own ``no-context'' natural response as context.
      Each cell is the percentage of queries whose with-context opinion still matches the no-context opinion.
      {\setlength{\fboxsep}{1pt}\colorbox{yellow!20}{Yellow shading}} marks the context condition with the smallest effect on opinion consistency. {\setlength{\fboxsep}{1pt}\colorbox{gray!20}{Gray shading}} marks the condition with the largest effect (the context whose polarity opposes the model's natural response).}
\label{tab:rq1-wvs}
\end{table}

%% file: figures/fig_rq1_lre_bars.tex
\begin{figure}[t]
  \centering
  \includegraphics[width=0.95\linewidth]{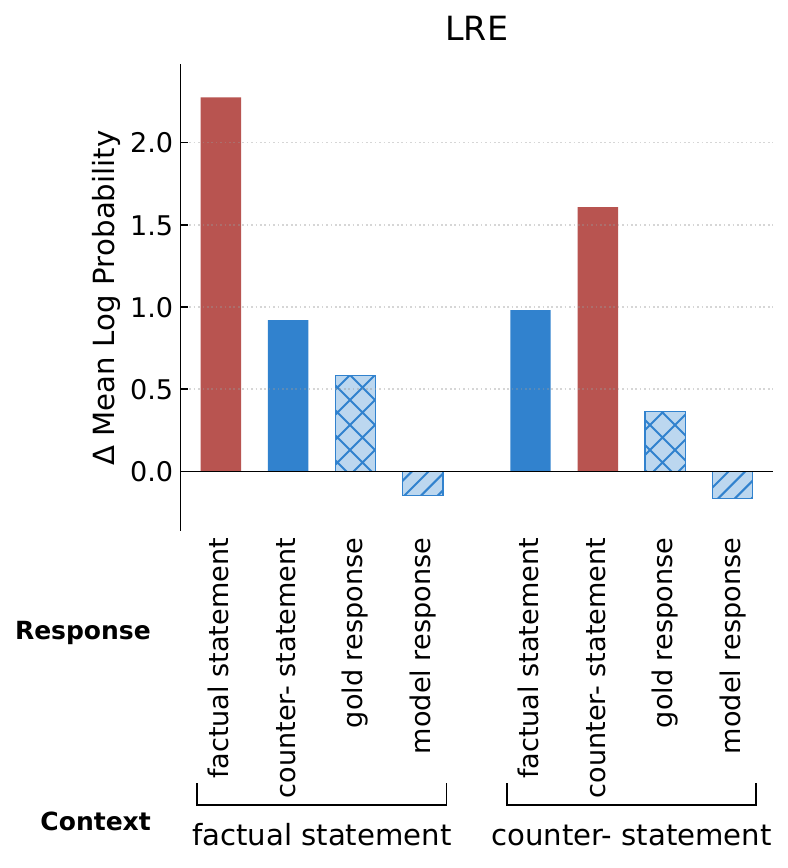}
  \caption{
    Sentence-level contextual entrainment on Llama-3.1-8B (\textbf{RQ1}).
    \textbf{Context} indicates the context type, which supports two settings: factual and counterfactual (counter-);
    \textbf{Response} indicates the text on which the log probability is computed.
    The \textcolor[HTML]{B85450}{\textbf{red}} bars represent contextual entrainment, and the \textcolor[HTML]{3182CE}{\textbf{blue}} bars represent distraction.}
  \label{fig:rq1-lre-bars}
\end{figure}

%% file: figures/fig_rq2_wvs_panel.tex
\begin{figure*}[t]
  \centering
  \begin{subfigure}[t]{0.24\linewidth}
    \centering
    \includegraphics[width=\linewidth]{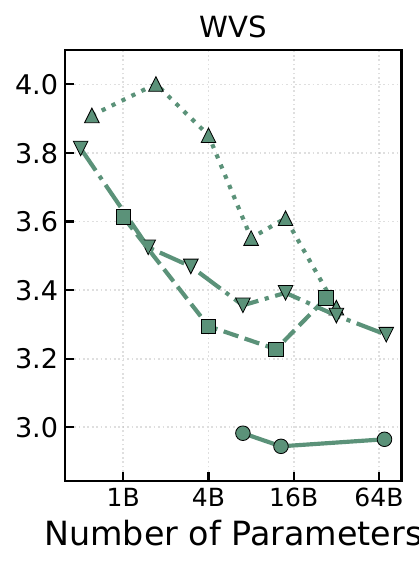}
    \caption{Support statement.}
    \label{fig:rq2-wvs-E}
  \end{subfigure}\hfill
  \begin{subfigure}[t]{0.24\linewidth}
    \centering
    \includegraphics[width=\linewidth]{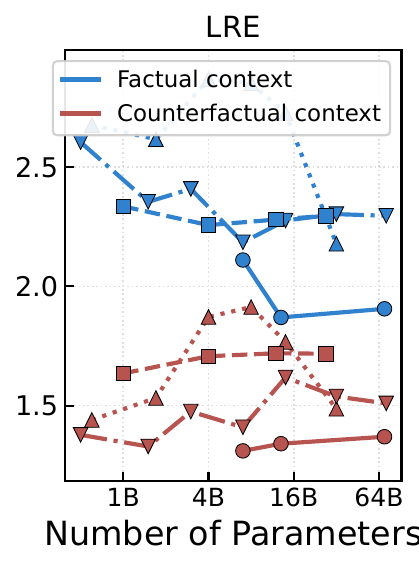}
    \caption{Counter- and Factual.}
    \label{fig:rq2-lre-E}
  \end{subfigure}\hfill
  \begin{subfigure}[t]{0.24\linewidth}
    \centering
    \includegraphics[width=\linewidth]{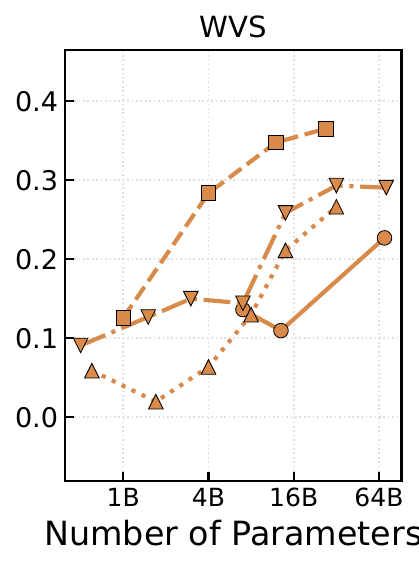}
    \caption{Support response.}
    \label{fig:rq2-wvs-I}
  \end{subfigure}\hfill
  \begin{subfigure}[t]{0.24\linewidth}
    \centering
    \includegraphics[width=\linewidth]{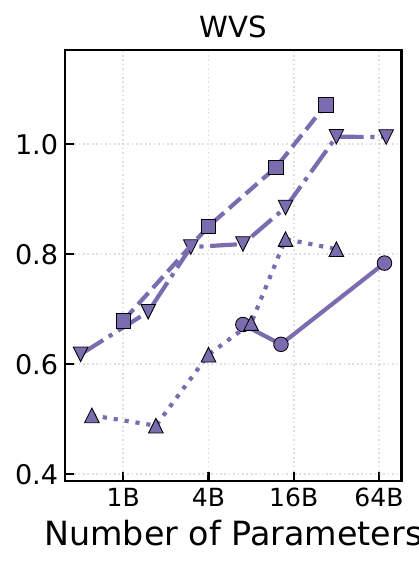}
    \caption{Oppose statement.}
    \label{fig:rq2-wvs-O}
  \end{subfigure}

  \caption{Model-size effects on sentence-level contextual entrainment (\textbf{RQ2}).
  The context for (a), (c), and (d) is \textbf{support statement};
    the caption represents the response.
    (b) is that the response equals the context, under factual and counterfactual contexts.
  Model family encoding: \llamamarker{}~Llama-2,  \qwenthreemarker{}~Qwen3,  \qwentwofivemarker{}~Qwen2.5,  \gemmamarker{}~Gemma-3.
  The y-axis shows the ``$\Delta$ Mean Log Probability'' under a specific context and response setting.
  }
  \label{fig:rq2-wvs}
\end{figure*}

%% file: tables/table_rq3_entrainment.tex
\begin{table*}[t]
\centering
\footnotesize
\setlength{\tabcolsep}{4pt}
\begin{tabular}{l lllllll}
\toprule
  & \multicolumn{2}{c}{\textbf{Gemma-2-9B}} & \multicolumn{2}{c}{\textbf{Llama-3.1-8B}} & \multicolumn{2}{c}{\textbf{Mistral-7B}} & \textbf{Avg.} \\
\cmidrule(lr){2-3} \cmidrule(lr){4-5} \cmidrule(lr){6-7}
  & \textbf{Base} & \textbf{IT} & \textbf{Base} & \textbf{IT} & \textbf{Base} & \textbf{IT} &   \\
\midrule
Unmasked & $+2.42$ & $+2.32$ & $+2.11$ & $+2.14$ & $+1.67$ & $+1.48$ & $+2.02$ \\
Random head & $+2.41_{-0.01}$ & $+2.31_{-0.01}$ & $+2.11_{-0.00}$ & $+2.14_{+0.01}$ & $+1.65_{-0.03}$ & $+1.47_{-0.01}$ & $+2.01_{-0.01}$ \\
\rowcolor{green!12}
Per-relation head & $+0.56_{-1.86}$ & $+0.69_{-1.63}$ & $-0.31_{-2.42}$ & $-0.02_{-2.16}$ & $-0.14_{-1.81}$ & $-0.48_{-1.96}$ & $+0.05_{-1.97}$ \\
\rowcolor{orange!15}
Shared head & $+1.26_{-1.16}$ & $+1.24_{-1.09}$ & $+1.46_{-0.65}$ & $+1.06_{-1.08}$ & $+0.70_{-0.97}$ & $+0.93_{-0.55}$ & $+1.11_{-0.91}$ \\
\bottomrule
\end{tabular}
\caption{Contextual entrainment averaged over 16 LRE relations (\textbf{RQ3}).
``Unmasked'' is the original model without any attention heads masked.
``Random head'' is averaged over 10 seeds of random head sets matched in size to the set of shared heads.
``Per-relation head'' uses each relation's own trained mask;
``Shared head'' uses the shared head mask (2\%-4\% of attention heads).
{\setlength{\fboxsep}{1pt}\colorbox{green!12}{Green shading}} marks the per-relation mask, which almost entirely eliminates contextual entrainment; {\setlength{\fboxsep}{1pt}\colorbox{orange!15}{Orange shading}} marks the shared head mask, which roughly halves entrainment while preserving model performance.}
\label{tab:rq3-entrainment}
\end{table*}

%% file: tables/table_rq3_accuracy.tex
\begin{table*}[t]
\centering
\footnotesize
\setlength{\tabcolsep}{4pt}
\begin{tabular}{l l lllllll}
\toprule
  &   & \multicolumn{2}{c}{\textbf{Gemma-2-9B}} & \multicolumn{2}{c}{\textbf{Llama-3.1-8B}} & \multicolumn{2}{c}{\textbf{Mistral-7B}} & \textbf{Avg.} \\
\cmidrule(lr){3-4} \cmidrule(lr){5-6} \cmidrule(lr){7-8}
\textbf{Prompt} & \textbf{Mask} & \textbf{Base} & \textbf{IT} & \textbf{Base} & \textbf{IT} & \textbf{Base} & \textbf{IT} &   \\
\midrule
\multirow{5}{*}{w/o context} & Unmasked & $61.5$ & $84.8$ & $80.1$ & $82.8$ & $71.0$ & $84.7$ & $77.5$ \\
 \cmidrule(lr){2-9}
 & Random head & $\textbf{56.9}_{-4.6}$ & $84.8_{-0.0}$ & $73.1_{-6.9}$ & $\textbf{82.8}_{-0.0}$ & $59.0_{-11.9}$ & $83.7_{-1.0}$ & $73.4_{-4.1}$ \\
 & Per-relation head & $39.9_{-21.6}$ & $82.6_{-2.2}$ & $74.8_{-5.3}$ & $78.4_{-4.4}$ & $66.0_{-5.0}$ & $82.7_{-2.0}$ & $70.7_{-6.8}$ \\
 & Shared head & $52.8_{-8.7}$ & $\textbf{85.6}_{+0.8}$ & $\textbf{77.4}_{-2.7}$ & $82.2_{-0.7}$ & $\textbf{68.3}_{-2.7}$ & $\textbf{84.7}_{-0.1}$ & $\textbf{75.2}_{-2.3}$ \\
\midrule
\multirow{5}{*}{w/ context} & Unmasked & $7.8$ & $75.6$ & $60.5$ & $77.7$ & $56.9$ & $85.4$ & $60.7$ \\
\cmidrule(lr){2-9}
 & Random head& $\textbf{10.4}_{+2.6}$ & $74.5_{-1.1}$ & $56.9_{-3.6}$ & $75.5_{-2.2}$ & $51.6_{-5.3}$ & $84.0_{-1.4}$ & $58.8_{-1.8}$ \\
 & Per-relation head & $9.5_{+1.7}$ & $\textbf{82.2}_{+6.6}$ & $62.2_{+1.6}$ & $78.7_{+1.0}$ & $58.3_{+1.3}$ & $81.5_{-3.9}$ & $62.1_{+1.4}$ \\
 & Shared head & $9.3_{+1.5}$ & $81.0_{+5.4}$ & $\textbf{64.9}_{+4.4}$ & $\textbf{82.2}_{+4.5}$ & $\textbf{64.9}_{+7.9}$ & $\textbf{84.4}_{-1.0}$ & $\textbf{64.5}_{+3.8}$ \\
\bottomrule
\end{tabular}
\caption{Free-generation accuracy on the 16 LRE relations (\textbf{RQ3}).
``w/o context'' is the bare query (capability preservation); 
``w/ context'' is the query preceded by the factual statement (entrainment robustness). 
Subscripts give the change in accuracy relative to the ``Unmasked'' row.
\textbf{Bold} marks the highest free-generation accuracy among the three masking strategies.}
\label{tab:rq3-accuracy}
\end{table*}